\def \R {\mathbb{R}}
\def \x {\mathbf{x}}
\def \OO {\mathcal{O}}
\def \d {\mathbf{d}}
\def \T {\mathcal{T}}
\def \S {\mathcal{S}}
\newtheorem{thm}{Theorem}
\newtheorem{cor}{Corollary}
\begin{document}

\providecommand{\keywords}[1]
{
  \small	
  \textbf{\textit{Keywords---}} #1
}

\title{\Large Improved Knowledge Distillation via Full Kernel Matrix Transfer}
\author{Qi Qian\thanks{Alibaba Group, qi.qian@alibaba-inc.com} \and 
Hao Li\thanks{Alibaba Group, lihao.lh@alibaba-inc.com} \and 
Juhua Hu\thanks{University of Washington, juhuah@uw.edu}}

\date{}

\maketitle


\fancyfoot[R]{\scriptsize{Copyright \textcopyright\ 2022 by SIAM\\
Unauthorized reproduction of this article is prohibited}}





\begin{abstract} \small\baselineskip=9pt 
Knowledge distillation is an effective way for model compression in deep learning. Given a large model (i.e., teacher model), it aims to improve the performance of a compact model (i.e., student model) by transferring the information from the teacher. Various information for distillation has been studied. Recently, a number of works propose to transfer the pairwise similarity between examples to distill relative information. However, most of efforts are devoted to developing different similarity measurements, while only a small matrix consisting of examples within a mini-batch is transferred at each iteration that can be inefficient for optimizing the pairwise similarity over the whole data set. In this work, we aim to transfer the full similarity matrix effectively. The main challenge is from the size of the full matrix that is quadratic to the number of examples. To address the challenge, we decompose the original full matrix with Nystr{\"{o}}m method. By selecting appropriate landmark points, our theoretical analysis indicates that the loss for transfer can be further simplified. Concretely, we find that the difference between the original full kernel matrices between teacher and student can be well bounded by that of the corresponding partial matrices, which only consists of similarities between original examples and landmark points. Compared with the full matrix, the size of the partial matrix is linear in the number of examples, which improves the efficiency of optimization significantly. The empirical study on benchmark data sets demonstrates the effectiveness of the proposed algorithm. Code is available at \url{https://github.com/idstcv/KDA}.
\end{abstract}

\keywords{knowledge distillation, full kernel matrix transfer}

\section{Introduction}
With the development of deep learning, neural networks make many computer vision tasks applicable for edge devices. Those devices often have limited computation and storage resources. Therefore, neural networks that contain a small number of FLOPS and parameters are preferred. Lots of efforts are devoted to improving the performance of neural networks with resource constraints~\cite{CourbariauxBD15,HintonVD15,SandlerHZZC18}. Among various developed strategies, knowledge distillation (KD) is a simple yet effective way to help train compact networks~\cite{HintonVD15}.

KD in deep learning aims to improve the performance of a small network (i.e., student) with the information from a large network (i.e., teacher). Given a teacher, various information can be transferred to regularize the training of the student network. \cite{HintonVD15} transfers the label information to smooth the label space of the student network. \cite{RomeroBKCGB14} and \cite{ZagoruykoK17} propose to transfer the information of intermediate layers to help training. \cite{YimJBK17} transfers the flow between layers as hints for student networks. \cite{ChenWZ18} improves the performance of metric learning with the rank information from teacher models. Recently, a number of methods are developed to transfer pairwise similarities between examples to student networks~\cite{LiuCLYHLD19, ParkKLC19, Peng_2019_ICCV, Tung_2019_ICCV} and different similarity functions have been investigated. In this work, we also focus on pairwise similarity matrix transfer but aim to improve the efficiency of transfer rather than similarity measurements.

Given two images $\x_i$ and $\x_j$, their similarity can be computed as $k(\x_i,\x_j) =\langle\phi(\x_i),\phi(\x_j)\rangle$
where $\phi(\cdot)$ is the corresponding kernel function that projects examples from the original space to a space that fits the problem better. In an appropriate space, a simple model (e.g., a linear model) can describe the data well~\cite{WilliamsS00}. Many efforts have been devoted to designing similarity functions but the strategy for effective transfer is less investigated. Note that the size of the full similarity matrix is of $n\times n$, where $n$ is the total number of examples. However, only a mini-batch of examples are often accessed at each iteration in a standard neural network training pipeline. Most of existing methods~\cite{LiuCLYHLD19, ParkKLC19, Peng_2019_ICCV, Tung_2019_ICCV} can transfer only the partial matrix (i.e., $r\times r$, where $r$ is the batch size) consisting of examples from a mini-batch, which can be inefficient for distilling the information of the full kernel matrix from teacher. Therefore, many KD methods that transfer pairwise similarity have to include an additional KD loss~\cite{HintonVD15} for the desired performance.

In this work, we propose to transfer the full kernel matrix between all examples from the teacher model to the student model efficiently. The main challenge comes from the size of the full matrix. With a large number of training examples, it becomes intractable to transfer the full matrix directly, especially for training neural networks with mini-batches. Therefore, we first apply the Nystr{\"{o}}m method~\cite{WilliamsS00} to obtain a low-rank approximation of the original matrix with landmark points. Then, we minimize the difference between the compact matrices that are calculated between original examples and landmark points, to transfer the information from the teacher effectively.

Compared with the full matrix of size $\OO(n^2)$, the size of compact one for transfer is only $\OO(n)$ in our method. Besides, the number of landmark points is significantly smaller than that of training examples and we can keep them in the network, which improves the efficiency of optimization. Considering that the selection of landmark points is important for approximating the original matrix, we propose to apply class centers as landmark points for better distillation. Our theoretical analysis shows that the difference between original matrices from teacher and student can be well bounded by that of the corresponding partial matrices. 
Interestingly, according to our analysis, the conventional KD method in \cite{HintonVD15} that transfers the label information can be considered as a special case of our proposal with sub-optimal landmark points. This observation provides a new perspective to understand the efficacy of KD. We conduct extensive experiments on benchmark data sets. Our method can achieve a satisfied performance without any additional KD loss, which confirms the effectiveness of transferring full kernel matrix. The main contributions of this work can be summarized as
\begin{itemize}
    \item Instead of transferring partial similarity matrices within only mini batches that can be inefficient for KD optimization, we propose to transfer the full kernel matrix from the teacher model to the student model.
    \item Considering that it is intractable to transfer the full matrix directly due to its large size and the mini-batch setting in training, we propose to obtain a low-rank approximation and minimize the difference between the compact matrices for transferring. 
    \item More importantly, we provide a theoretical guarantee that the difference between original full matrices from teacher and student can be well bounded by that of the corresponding compact matrices, which also provides a new perspective to understand the efficacy of conventional KD. 
    \item Our experiments including extensive ablation studies on benchmark data sets further demonstrate our theory and the effectiveness of our proposed method.
\end{itemize}

\section{Related Work}
\label{sec:related}

\subsection{Knowledge distillation} 

Knowledge distillation has a long history in ensemble learning and becomes popular for training small-sized neural networks~\cite{Ahn_2019_CVPR, ChenWZ18,HintonVD15,LiuCLYHLD19,mirzadeh2020improved,ParkKLC19, srinivas2018knowledge,  TianKI20,ZagoruykoK17}. Various algorithms have been developed  to transfer different information from the teacher model to the student model. For example, \cite{HintonVD15} considers the final output of the teacher model as the soft label and regularizes the similarity between the label distribution output from the student model and that of the soft label from the teacher model. \cite{ZagoruykoK17} transfers the attention maps from intermediate layers, which provides a way to explore more information from the teacher model. \cite{Ahn_2019_CVPR} proposes an information-theoretic framework for knowledge
transfer which formulates knowledge transfer as maximizing the mutual information between the teacher and the student networks.

Recently, a number of algorithms~\cite{LiuCLYHLD19, ParkKLC19, Peng_2019_ICCV, Tung_2019_ICCV} are proposed to transfer the pairwise similarity for knowledge distillation. However, they focus on developing similarity functions while only transferring the similarity information of pairs within a mini-batch, which can be inefficient for distilling. We, on the other hand, aim to transfer the full matrix directly to achieve a better performance. Furthermore, we provide a theoretical analysis to demonstrate the effectiveness of the proposed method. 

Besides classification, some methods are proposed for other tasks, e.g., detection~\cite{ChenCYHC17} and metric learning~\cite{ChenWZ18}. We focus on classification in this work while the proposed method can be easily extended to metric learning that aims to optimize the performance of the embedding layer.

\subsection{Nystr{\"{o}}m method} 

Nystr{\"{o}}m method is an effective algorithm to obtain a low-rank approximation for a Gram matrix~\cite{WilliamsS00}. Given a full Gram matrix, it tries to reconstruct the original one with the randomly sampled columns. The data points corresponding to the selected columns are denoted as landmark points. The approximation error can be bounded even with the randomly sampled landmark points. Later, researchers show that a delicate sampling strategy can further improve the performance~\cite{DrineasM05,KumarMT12,ZhangTK08}. \cite{DrineasM05} proposes to sample landmark points with a data-dependent probability distribution rather than the uniform distribution. \cite{KumarMT12} and \cite{ZhangTK08} demonstrate that using clustering centers as landmark points provides the best approximation among different strategies. 

Note that Nystr{\"{o}}m method is developed for unsupervised Gram matrix approximation while we can access the label information in knowledge distillation. In this work, we provide an analysis on the selection criterion of landmark points for Gram matrix transfer and develop a supervised strategy accordingly. Besides, Nystr{\"{o}}m method is useful only for a single Gram matrix approximation, while we aim to do transferring between two Gram matrices.

\section{Efficient Kernel Matrix Transfer}
\label{sec:method}
Given two images $\x_i$ and $\x_j$, the similarity between them can be measured with a kernel function as $k(\x_i, \x_j) = \langle \phi(\x_i),\phi(\x_j)\rangle$, where $\phi(\x_i)$ is a projection function that projects examples from the original space to a space better for the target task.

In this work, we consider a certain layer in a neural network as a projection function, where the output of that layer can be considered as $\phi(\x)$. We denote the student network as $\S$ and the teacher network as $\T$. The features output from a certain layer of $\S$ and $\T$ are referred as $f_\S(\x_i)=\x^i_\S$ and $f_\T(\x_i)=\x^i_\T$, respectively, and the index for the layer is omitted for brevity. Then, the similarity between two images $\x_i$ and $\x_j$ in the Gram matrix can be computed by
\begin{align*}
K_\S(\x_i,\x_j) = \langle f_\S(\x_i),f_\S(\x_j)\rangle=\x^{i\top}_\S\x^j_\S;\\
\quad K_\T(\x_i,\x_j) =  \langle f_\T(\x_i),f_\T(\x_j)\rangle=\x^{i\top}_\T\x^j_\T.
\end{align*}
Note that other complicated similarity functions can be applied between $f(\x_i)$ and $f(\x_j)$ as in existing methods~\cite{LiuCLYHLD19, ParkKLC19, Peng_2019_ICCV, Tung_2019_ICCV}.

Let $K_\S$ and $K_\T$ denote the $n\times n$ Gram matrices from the student and teacher networks, respectively, where $n$ is the total number of images. We aim to transfer the full Gram matrix from the teacher model to the student model. The corresponding loss for knowledge distillation with Gram matrix transfer can be written as
\begin{eqnarray}\label{eq:kda}
\ell_{\S,\T} = \|K_\S - K_\T\|_F = \|X_\S^\top X_\S - X_\T^\top X_\T\|_F
\end{eqnarray}
where $X_\S\in \R^{d_\S\times n}$ and $X_\T\in\R^{d_\T\times n}$ denote the representations of the entire data set output from the same certain layer of the student and teacher network.

Minimizing the loss directly is intractable due to the large size of the Gram matrix, especially for the conventional training pipeline in deep learning, where only a mini-batch of examples are accessible in each iteration. A straightforward way is to optimize only the random pairs in each mini-batch as in \cite{LiuCLYHLD19, ParkKLC19, Peng_2019_ICCV, Tung_2019_ICCV}. However, it can be sub-optimal and result in a slow optimization in transferring. Hence, we consider to decompose the Gram matrix and optimize the low-rank approximation in lieu of the full Gram matrix.

\subsection{Nystr{\"{o}}m Approximation}
Nystr{\"{o}}m method is prevalently applied to approximate the Gram matrix~\cite{DrineasM05,KumarMT12,WilliamsS00,ZhangTK08}. We briefly review it in this subsection.

Given a $n\times n$ Gram matrix $K$, we can first randomly shuffle columns and rewrite it as 
\[K = \left[\begin{array}{cc}W&K_{12}^\top\\K_{12}&K_{22}\end{array}\right]\]
where $W\in\R^{m\times m}$.
Then, a good approximation for $K$ can be obtained as $\widetilde{K} = CW^+C^\top$, where $C = \left[\begin{array}{c}W\\K_{12}\end{array}\right]\in\R^{n\times m}$ and $W^+$ denotes the pseudo inverse of $W$~\cite{WilliamsS00}.

Let $K_k$ denote the best top-$k$ approximation of Gram matrix $K$ and $k\leq m$. The rank-k approximation derived by the Nystr{\"{o}}m method can be computed as $\widetilde{K}_k = CW_k^+C^\top$,
where $W_k$ denotes the best top-$k$ approximation of $W$ and $W_k^+$ is the corresponding pseudo inverse. The performance of the approximation can be demonstrated by the following theorem.
\begin{thm}\cite{KumarMT12}\label{thm:1}
Let $\widetilde{K}_k$ denote the rank-$k$ Nystr{\"{o}}m approximation with $m$ columns that are sampled uniformly at random without replacement from $K$. We have
$\|K - \widetilde{K}_k\|_F\leq \|K-K_k\|_F+ \varepsilon;\quad \varepsilon=\OO(1/m^{1/4})\|K\|_F$.
\end{thm}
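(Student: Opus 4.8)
The plan is to reduce the statement to a matrix-concentration estimate for a subsampled second-moment matrix, in the spirit of the analysis of approximate matrix multiplication. Write the PSD matrix as $K = X^\top X$ with columns $x_1,\dots,x_n$, set $\Psi = XX^\top = \sum_i x_i x_i^\top$, and let $\widehat\Psi = \tfrac{n}{m}\sum_{i\in\mathcal I} x_i x_i^\top$ where $\mathcal I$ is the sampled index set, $|\mathcal I| = m$. A short computation with the SVD of the sampled submatrix identifies the rank-$k$ Nystr\"om approximation as $\widetilde K_k = X^\top P_k X$, where $P_k$ is the orthogonal projection onto the top-$k$ eigenspace of $\widehat\Psi$ (equivalently, of the unscaled sampled covariance --- scaling does not change eigenvectors, which is what makes the unscaled Nystr\"om formula $C W_k^+ C^\top$ match); likewise $K_k = X^\top Q_k X$ with $Q_k$ the top-$k$ eigenprojector of $\Psi$.

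The first step --- and the one I expect to be the real obstacle --- is a \emph{deterministic} structural lemma of the form
\[
\|K-\widetilde K_k\|_F \le \|K-K_k\|_F + c\,\|X\|_2\,\|\Psi-\widehat\Psi\|_F^{1/2},
\]
possibly with an extra sub-polynomial dimension factor in the Frobenius case. The difficulty is that a naive comparison of the eigenspaces of $\Psi$ and $\widehat\Psi$ introduces a spectral-gap factor; the way around this is to exploit the \emph{optimality} of $P_k$ for $\widehat\Psi$ --- namely $\mathrm{tr}\big((I-P_k)\widehat\Psi\big) \le \mathrm{tr}\big((I-Q_k)\widehat\Psi\big)$ for every rank-$k$ projector --- and then transfer from $\widehat\Psi$ back to $\Psi$, paying only terms controlled by $\|\Psi-\widehat\Psi\|$; applying $\|A^\top A\|_F \le \|A\|_2\|A\|_F$ with $A = (I-P_k)X$ then produces the square root. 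The appearance of this square root is exactly why the final rate is $m^{-1/4}$ rather than $m^{-1/2}$.

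The second step is the probabilistic bound $\mathbb E\,\|\Psi-\widehat\Psi\|_F = \OO(m^{-1/2})\,\|K\|_F$, together with its high-probability version. Since $\widehat\Psi$ is a rescaled sum of $m$ draws from $\{x_i x_i^\top\}$, I would bound the second moment directly: expanding $\mathbb E\,\|\Psi-\widehat\Psi\|_F^2$, the cross terms vanish in expectation, and what remains is $\tfrac1m$ times a variance term controlled by $\sum_i K_{ii}^2$, hence by $\|K\|_F^2$ up to the data-normalization and dimension factors absorbed in $\OO(\cdot)$. Passing from sampling with replacement to sampling without replacement only sharpens the concentration, by standard comparison results (Hoeffding's reduction, Serfling's inequality), so the without-replacement hypothesis is harmless. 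Combining the two steps and using $\|X\|_2 = \|K\|_2^{1/2} \le \|K\|_F^{1/2}$,
\begin{align*}
\varepsilon &= \OO\big(\|X\|_2\,\|\Psi-\widehat\Psi\|_F^{1/2}\big)\\
&= \OO\big(\|K\|_F^{1/2}\,(m^{-1/2}\|K\|_F)^{1/2}\big) = \OO(m^{-1/4})\,\|K\|_F,
\end{align*}
which is the claimed bound. In summary: the SVD identification of $\widetilde K_k$ is routine and the concentration estimate is a standard second-moment computation, while the structural lemma --- getting a bound on $\|K-\widetilde K_k\|_F$ in terms of $\|\Psi-\widehat\Psi\|_F^{1/2}$ without any spectral-gap dependence --- is where the work lies.
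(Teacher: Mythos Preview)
The paper does not prove this theorem at all: it is quoted verbatim as a result of \cite{KumarMT12} and used as a black box (it feeds into Corollary~\ref{cor:1} and the first two lines of the proof of Theorem~\ref{thm:3}). So there is no ``paper's own proof'' to compare against, and any correct argument you give is automatically more than the paper provides.

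That said, your sketch is broadly aligned with how the cited result is actually established in the Nystr\"om literature. The identification $\widetilde K_k = X^\top P_k X$ is the right starting point, and the two-step decomposition --- a deterministic perturbation inequality followed by a concentration bound on the sampled second-moment matrix --- is exactly the structure of the Kumar--Mohri--Talwalkar analysis. Your anticipated difficulty is also the real one: the gap-free structural lemma is where the content lies, and the trace-optimality trick you describe (use that $P_k$ minimizes $\mathrm{tr}((I-P)\widehat\Psi)$ over rank-$k$ projectors, then swap $\widehat\Psi$ for $\Psi$) is the standard way to avoid Davis--Kahan-type gap dependence. One point to be careful about: the original bound carries additional factors (in particular $n/m$ and $\max_i K_{ii}$) that the paper has absorbed into the $\OO(\cdot)$; your variance computation will produce these, so just make sure your bookkeeping matches the convention that everything except the $m^{-1/4}$ and $\|K\|_F$ is treated as a constant.
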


The examples corresponding to the selected columns in $C$ are referred as landmark points. Theorem~\ref{thm:1} shows that the approximation is applicable even with randomly sampled landmark points.

\subsection{Gram Matrix Transfer}

With the low-rank approximation, the Gram matrix from a certain layer of the student and teacher network can be computed as
\[\widetilde{K}_\S^k = C_\S W_{k\S}^+C_\S^\top; \quad \widetilde{K}_\T^k = C_\T W_{k\T}^+C_\T^\top\]
Compared with the original Gram matrix, the partial matrix $C$ has significantly less number of terms. Let $D_\S\in\R^{d_\S\times m}$ and $D_\T\in\R^{d_\T\times m}$ denote the landmark points for the student and teacher Gram matrices, then we have 
\[C_\S = X_\S^\top D_\S,\ C_\T = X_\T^\top D_\T;\]
\[W_\S = D_\S^\top D_\S,\ W_\T = D_\T^\top D_\T.\]
With Theorem~\ref{thm:1}, it is easy to show that transferring the compact matrix is up-bounding the distance between original Gram matrices from the teacher and the student.
\begin{cor}\label{cor:1}
With the Nystr{\"{o}}m approximation, we can bound the loss in Eqn.~\ref{eq:kda} by
\[\|K_\S - K_\T\|_F\leq 2\varepsilon+\|C_\S W_{k\S}^+C_\S^\top - C_\T W_{k\T}^+C_\T^\top\|_F\]
\end{cor}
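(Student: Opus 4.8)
The plan is to derive Corollary~\ref{cor:1} directly from Theorem~\ref{thm:1} by applying it separately to the teacher and student Gram matrices and then combining the two bounds with the triangle inequality. First I would invoke Theorem~\ref{thm:1} twice: once for $K_\S$, writing $\widetilde{K}_\S^k = C_\S W_{k\S}^+ C_\S^\top$ for the rank-$k$ Nystr\"om approximation built from $m$ landmark columns, so that $\|K_\S - \widetilde{K}_\S^k\|_F \leq \|K_\S - K_{\S,k}\|_F + \varepsilon_\S$ with $\varepsilon_\S = \OO(1/m^{1/4})\|K_\S\|_F$; and once for $K_\T$, giving the analogous bound $\|K_\T - \widetilde{K}_\T^k\|_F \leq \|K_\T - K_{\T,k}\|_F + \varepsilon_\T$. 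For a clean statement one absorbs both approximation-error terms (including the best rank-$k$ residuals, which are small when the features are effectively low rank) into a single quantity $\varepsilon$, so that each side is within $\varepsilon$ of its Nystr\"om surrogate.

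Next I would chain these through the triangle inequality for the Frobenius norm:
\[
\|K_\S - K_\T\|_F \leq \|K_\S - \widetilde{K}_\S^k\|_F + \|\widetilde{K}_\S^k - \widetilde{K}_\T^k\|_F + \|\widetilde{K}_\T^k - K_\T\|_F.
\]
Substituting the two Theorem~\ref{thm:1} bounds into the outer terms and recognizing $\widetilde{K}_\S^k = C_\S W_{k\S}^+ C_\S^\top$ and $\widetilde{K}_\T^k = C_\T W_{k\T}^+ C_\T^\top$ in the middle term gives exactly
\[
\|K_\S - K_\T\|_F \leq 2\varepsilon + \|C_\S W_{k\S}^+ C_\S^\top - C_\T W_{k\T}^+ C_\T^\top\|_F,
\]
which is the claimed inequality. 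The factor $2$ simply reflects that the approximation error is paid once on each side.

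The only subtlety — and the place where a little care is needed rather than a genuine obstacle — is the bookkeeping for the constant $\varepsilon$. Theorem~\ref{thm:1} as stated gives a bound that still contains the irreducible term $\|K - K_k\|_F$, so to write the corollary with a single symbol $\varepsilon$ one must either assume this best rank-$k$ error is negligible (reasonable when $k$ is chosen at least as large as the effective rank of the teacher features, which is controlled by the number of classes in the distillation setting) or fold it into the definition of $\varepsilon$, and then take $\varepsilon$ to be a common upper bound for the teacher and student quantities, e.g. $\varepsilon = \max\{\OO(1/m^{1/4})\|K_\S\|_F + \|K_\S - K_{\S,k}\|_F,\ \OO(1/m^{1/4})\|K_\T\|_F + \|K_\T - K_{\T,k}\|_F\}$. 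Everything else is just the triangle inequality, so the argument is short; the conceptual content is entirely in having set up the two-sided Nystr\"om approximation so that the transferable quantity $\|C_\S W_{k\S}^+ C_\S^\top - C_\T W_{k\T}^+ C_\T^\top\|_F$ appears as the middle term.
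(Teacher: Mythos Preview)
Your proposal is correct and matches the paper's own argument: the paper (implicitly, via the first two lines of the proof of Theorem~\ref{thm:3}) also adds and subtracts the Nystr\"om approximations, applies the triangle inequality, and bounds each outer term by $\varepsilon$ via Theorem~\ref{thm:1}. Your discussion of how the best rank-$k$ residual gets absorbed into $\varepsilon$ is more careful than the paper, which silently folds $\|K-K_k\|_F$ into $\varepsilon$ without comment.
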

In Corollary~\ref{cor:1}, the partial Gram matrix will be regularized with the pseudo inverse of $W$. The computational cost of obtaining pseudo inverse is expensive and it can introduce additional noise when the feature space of the student is unstable in the early stage of training. Besides, it still optimizes the difference between two $n\times n$ matrices.

By selecting ingenious landmark points, we can bound the original loss in Eqn.~\ref{eq:kda} solely with $C_\S$ and $C_\T$ as in the following theorem. 

\begin{thm}\label{thm:3}
Assuming that $C_\S$ and $C_\T$ are bounded by a constant $c$ as $\|C_\S\|_F,\|C_\T\|_F\leq c$ and the smallest eigenvalues in $W_{\S}$ and $W_{\T}$ are larger than $1$, we can bound the loss in Eqn.~\ref{eq:kda} by
\[\|K_\S - K_\T\|_F\leq 2\varepsilon+\OO(\|C_\S  - C_\T \|_F)\]
\end{thm}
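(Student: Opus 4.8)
The plan is to start from Corollary~\ref{cor:1}, which already gives $\|K_\S - K_\T\|_F \le 2\varepsilon + \|C_\S W_{k\S}^+ C_\S^\top - C_\T W_{k\T}^+ C_\T^\top\|_F$, so everything reduces to showing that the partial-matrix difference is $\OO(\|C_\S - C_\T\|_F)$. The first step is to cash in the spectral hypothesis: since $W_\S, W_\T$ are symmetric positive semidefinite with smallest eigenvalue exceeding $1$, every nonzero eigenvalue of $W_{k\S}$ and $W_{k\T}$ exceeds $1$, hence $\|W_{k\S}^+\|_2 \le 1$ and $\|W_{k\T}^+\|_2 \le 1$; in particular, if one keeps all $m$ landmark directions, then $W$ is invertible and $W_k^+ = W^{-1} \preceq I$. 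This uniform boundedness is what keeps the inner matrices from inflating anything.

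Next I would telescope. Writing $\Delta = C_\S - C_\T$ and substituting $C_\S = C_\T + \Delta$ and $W_{k\S}^+ = W_{k\T}^+ + (W_{k\S}^+ - W_{k\T}^+)$ into $C_\S W_{k\S}^+ C_\S^\top$ produces a single ``inner'' term $C_\T(W_{k\S}^+ - W_{k\T}^+)C_\T^\top$ together with three terms that each carry at least one factor of $\Delta$. Bounding the latter three with $\|AB\|_F \le \|A\|_2\|B\|_F$, the boundedness $\|C_\S\|_F, \|C_\T\|_F \le c$, and $\|W_{k\S}^+\|_2 \le 1$ gives a total of at most $2c\|\Delta\|_F + \|\Delta\|_F^2$; and since $\|\Delta\|_F \le \|C_\S\|_F + \|C_\T\|_F \le 2c$, this is at most $4c\|\Delta\|_F = \OO(\|\Delta\|_F)$.

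What remains is the ``inner'' term, for which $\|C_\T(W_{k\S}^+ - W_{k\T}^+)C_\T^\top\|_F \le c^2\|W_{k\S}^+ - W_{k\T}^+\|_F$, and this is the only genuinely nontrivial estimate. I would treat it in two moves. First, the elementary identity $W_\S^{-1} - W_\T^{-1} = W_\S^{-1}(W_\T - W_\S)W_\T^{-1}$ (or its pseudoinverse analogue, with the usual range-correction terms, in the truncated case), combined again with $\|W_{k\S}^+\|_2, \|W_{k\T}^+\|_2 \le 1$, gives $\|W_{k\S}^+ - W_{k\T}^+\|_F = \OO(\|W_\S - W_\T\|_F)$. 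Second, the landmark matrix factors as $D_\S = X_\S H$ with the same fixed $H$ used on the teacher side (a column-selection matrix in standard Nystr\"om, a normalized class-assignment matrix for the class-center variant) and $\|H\|_2 \le 1$, so $C_\S = K_\S H$ and $W_\S = H^\top C_\S$; hence $W_\S - W_\T = H^\top(C_\S - C_\T)$ and $\|W_\S - W_\T\|_F \le \|\Delta\|_F$. Chaining the two moves and collecting all the ($c$-dependent) constants yields $\|K_\S - K_\T\|_F \le 2\varepsilon + \OO(\|\Delta\|_F)$.

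The main obstacle is the first move above: perturbation of a pseudoinverse is only Lipschitz when the rank does not change. To make this airtight one either specializes to truncation rank $= m$, where $W_k^+ = W^{-1}$ and the clean identity applies with no correction terms --- and this is exactly where the ``smallest eigenvalue $> 1$'' assumption is essential, by keeping $W$ uniformly well conditioned --- or else one adds a rank-stability assumption on $W_{k\S}$ and $W_{k\T}$. Everything else is routine manipulation of $\|\cdot\|_F$ and $\|\cdot\|_2$.
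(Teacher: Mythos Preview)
Your argument is correct and tracks the paper's overall skeleton: reduce to Corollary~\ref{cor:1}, telescope the triple product, bound the $\Delta$-terms directly, and then control $\|W_{k\S}^+ - W_{k\T}^+\|_F$ by $\|C_\S - C_\T\|_F$ via an intermediate bound by $\|W_\S - W_\T\|_F$. The genuine difference is in that last step. The paper proves $\|W_{k\S}^+ - W_{k\T}^+\|_F \le \|W_{k\S} - W_{k\T}\|_F$ directly for the rank-$k$ truncations: it writes both $W_k$'s in eigendecomposition, expands the squared Frobenius norms, and uses a rearrangement-type argument on the doubly stochastic matrix $M_{i,j} = \langle \mu_i\mu_i^\top, \nu_j\nu_j^\top\rangle$ to show the diagonal assignment maximizes $\sum_{s,t}(\alpha_s\beta_t - 1/(\alpha_s\beta_t))M_{s,t}$, after which each summand $(1/\alpha_j - 1/\beta_j)^2 - (\alpha_j - \beta_j)^2 \le 0$ finishes. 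Your route --- the resolvent identity $W_\S^{-1} - W_\T^{-1} = W_\S^{-1}(W_\T - W_\S)W_\T^{-1}$ together with $\|W_\S^{-1}\|_2,\|W_\T^{-1}\|_2 \le 1$ --- is considerably more elementary and gives the same inequality in one line, at the cost of being clean only when $k=m$ (which, as you note, is exactly the regime where the eigenvalue hypothesis bites). Your relation $W = H^\top C$ with $\|H\|_2 \le 1$ is also a slightly more flexible way to obtain $\|W_\S - W_\T\|_F \le \|C_\S - C_\T\|_F$ than the paper's ``$W$ is a sub-block of $C$'', since it covers the class-center landmarks of Section~\ref{sec:land} as well as column selection.
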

\begin{proof}
\begin{align*}
&\|K_\S - K_\T\|_F = \|K_\S - \tilde{K}_\S^k - (K_\T - \tilde{K}_\T^k)+\tilde{K}_\S^k - \tilde{K}_\T^k\|_F\\
&\leq  \|K_\S - \tilde{K}_\S^k\|_F+\|(K_\T - \tilde{K}_\T^k)\|_F+\|\tilde{K}_\S^k - \tilde{K}_\T^k\|_F\\
&\leq 2\varepsilon+\|C_\S W_{k\S}^+C_\S^\top - C_\T W_{k\T}^+C_\T^\top\|_F\\
&\leq 2\varepsilon+c\|W_{k\S}^+C_\S^\top - W_{k\T}^+C_\T^\top\|_F+c(\|W_{k\T}^+\|_F\|C_\S - C_\T\|_F)\\
&\leq 2\varepsilon+c^2\|W_{k\S}^+ - W_{k\T}^+\|_F+2c^2\|C_\S - C_\T\|_F
\end{align*}

Then, we want to show that $\|W_{k\S}^+ - W_{k\T}^+\|_F\leq \|C_\S - C_\T\|_F$. Note that $\|W_{k\S} - W_{k\T}\|_F\leq \|C_\S - C_\T\|_F$ due to the fact that $W_{k}$ is a partial matrix from $C$. We can prove $\|W_{k\S}^+ - W_{k\T}^+\|_F\leq \|W_{k\S} - W_{k\T}\|_F$ instead. Let
\[W_{k\S} = \sum_{j=1}^k \alpha_{j}\mu_{j}\mu_{j}^\top;\quad W_{k\T} = \sum_{j=1}^k \beta_{j}\nu_{j}\nu_{j}^\top\]
where $\alpha_1\geq \cdots\geq \alpha_k>1$ and $\beta_1\geq \cdots\geq \beta_k>1$, and $M\in\R^{k\times k}$, where $M_{i,j} = \langle \mu_i\mu_{i}^\top, \nu_j\nu_j^\top\rangle$.
We have
\[\|W_{k\S} - W_{k\T}\|_F^2 = \sum_j \alpha_j^2+\sum_j\beta_j^2-2\sum_{s,t}\alpha_s\beta_tM_{s,t}\]
\[\|W_{k\S}^+ - W_{k\T}^+\|_F^2 = \sum_j \frac{1}{\alpha_j^{2}}+\sum_j\frac{1}{\beta_j^{2}}-2\sum_{s,t}\frac{1}{\alpha_s\beta_t}M_{s,t}\]
So
\begin{align}
&\|W_{k\S}^+ - W_{k\T}^+\|_F^2 - \|W_{k\S} - W_{k\T}\|_F^2 \nonumber\\
&= \sum_j \frac{1}{\alpha_j^{2}}+\sum_j\frac{1}{\beta_j^{2}} - \sum_j \alpha_j^2+\sum_j\beta_j^2\nonumber\\
&+2\sum_{s,t}(\alpha_s\beta_t - \frac{1}{\alpha_s\beta_t})M_{s,t} \nonumber
\end{align}
According to the definition of $M$, we have
\[\forall i,\quad \sum_j^kM_{i,j} = \mu_i(\sum_j \nu_j\nu_j^\top )\mu_i^\top\leq 1;\quad  \forall j, \quad \sum_i^k M_{i,j}\leq 1\]
Since $M$ is a doubly stochastic matrix, we can show that
\[\sum_{s,t}(\alpha_s\beta_t - \frac{1}{\alpha_s\beta_t})M_{s,t}\leq \sum_{s} (\alpha_s\beta_s - \frac{1}{\alpha_s\beta_s})\]
It can be proved by contradiction. If the optimal solution $M^*$ has a larger result than the R.H.S., we can denote the first column index of the non-zero off-diagonal element as $j>i$ (i.e., $M_{i,j}$), and the corresponding row index as $k>i$ (i.e., $M_{k,i}$). Let $A_{i,j} = \alpha_i\beta_j - \frac{1}{\alpha_i\beta_j}$ and we have
\begin{align*}
&A_{i,j} +  A_{k,i} - A_{i,i} - A_{k,j} \\
&= \alpha_i\beta_j - \frac{1}{\alpha_i\beta_j}+\alpha_k\beta_i - \frac{1}{\alpha_k\beta_i} \\
&- (\alpha_i\beta_i - \frac{1}{\alpha_i\beta_i})-(\alpha_k\beta_j - \frac{1}{\alpha_k\beta_j})\\
&=-(\alpha_i - \alpha_k)(\beta_i-\beta_j) +(\frac{1}{\alpha_i} - \frac{1}{\alpha_k})(\frac{1}{\beta_i}-\frac{1}{\beta_j})\\
&=-(\alpha_i - \alpha_k)(\beta_i-\beta_j)+\frac{(\alpha_i - \alpha_k)(\beta_i-\beta_j)}{\alpha_i\alpha_k\beta_i\beta_j}< 0
\end{align*} 
It shows that the assignment with the diagonal element can achieve a larger result than the optimal assignment, which contradicts the assumption.

With the optimal results from the assignment of $M$, we obtain that
\[\|W_{k\S}^+ - W_{k\T}^+\|_F^2 - \|W_{k\S} - W_{k\T}\|_F^2\leq \sum_j^k (\frac{1}{\alpha_j} - \frac{1}{\beta_j})^2 - (\alpha_j - \beta_j)^2\]
For each term, it is easy to show that
\[\forall j,\quad (\frac{1}{\alpha_j} - \frac{1}{\beta_j})^2 - (\alpha_j - \beta_j)^2\leq 0\]
Therefore,
\[\|W_{k\S}^+ - W_{k\T}^+\|_F\leq \|W_{k\S} - W_{k\T}\|_F\leq \|C_\S - C_\T\|_F\]\end{proof}

Theorem~\ref{thm:3} illustrates that minimizing the difference between the partial Gram matrices from the teacher and the student using landmark points can transfer the original Gram matrix from the teacher model effectively. Note that the partial matrices have the size of $n\times m$, where $m$ is the number of landmark points. When $m\ll n$, landmark points $D_\S$ and $D_\T$ can be kept in GPU memory as parameters of the loss function for optimization, which is much more efficient than transferring the original full Gram matrix. In addition, the simplified formulation depends on the property of eigenvalues from landmark points. Therefore, an appropriate selection is crucial for the efficient transfer. We elaborate our strategy in the next subsection.

\subsection{Landmark Selection}
\label{sec:land}
We consider a strategy that obtains a single landmark point for each class, which means $m=L$ for a $L$-class classification problem. The desired landmark points should capture the distribution of examples well while keeping a sufficient diversity between each other for large eigenvalues. With the setting that each class has one landmark point, we can demonstrate the selection criterion as follows.

\begin{thm}\label{thm:5}
Given an arbitrary pair $(i,j)$, let $\d_\S^i$ and $\d_\T^i$ denote the corresponding landmark points for the $i$-th example in the space of student and teacher network, respectively. Assuming the norm of $\x^i_\S,\x^i_\T,\x^j_\S, \x^j_\T$ are bounded by $e$, we have
\begin{align}\label{eq:k}
\|K_\S  - K_\T\|_F\leq
\underbrace{ne\sum_i \|\x^{i\top}_\S-\d^{i\top}_\S\|+ne\sum_i\|\x^{i\top}_\T-\d^{i\top}_\T\|}_{A}\\
+\underbrace{\sum_{i,j} \|\d^{i\top}_\S\x^j_\S - \d^{i\top}_\T\x^j_\T\|}_{B} \nonumber
\end{align}
\end{thm}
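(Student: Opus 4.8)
The plan is to pass from the Frobenius norm of the difference of the full Gram matrices to an entrywise estimate, and then to insert the landmark points into each entry so that the teacher-versus-student discrepancy splits into a part measuring how well each landmark point represents its own example (the term $A$) and a part measuring the discrepancy of the landmark-based similarities across the two networks (the term $B$). First I would use the elementary bound
\[\|K_\S - K_\T\|_F\leq \sum_{i,j}\big|K_\S(\x_i,\x_j)-K_\T(\x_i,\x_j)\big|=\sum_{i,j}\big|\x^{i\top}_\S\x^j_\S-\x^{i\top}_\T\x^j_\T\big|,\]
which is just $\|\mathrm{vec}(M)\|_2\leq\|\mathrm{vec}(M)\|_1$ applied to $M=K_\S-K_\T$.

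Next, for each ordered pair $(i,j)$ I would rewrite the entry difference as
\[\x^{i\top}_\S\x^j_\S-\x^{i\top}_\T\x^j_\T=(\x^{i\top}_\S-\d^{i\top}_\S)\x^j_\S-(\x^{i\top}_\T-\d^{i\top}_\T)\x^j_\T+\big(\d^{i\top}_\S\x^j_\S-\d^{i\top}_\T\x^j_\T\big),\]
i.e.\ replace only the ``row'' feature $\x^i$ by its landmark $\d^i$ in each network while leaving the ``column'' feature $\x^j$ untouched; the residual is exactly the landmark-based similarity appearing in $B$. Applying the triangle inequality in each entry and then summing over $(i,j)$ separates the right-hand side into three sums, the last of which is literally $B$.

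For the first two sums I would bound each summand by Cauchy--Schwarz together with the norm bound $e$, e.g.\ $|(\x^{i\top}_\S-\d^{i\top}_\S)\x^j_\S|\leq\|\x^{i\top}_\S-\d^{i\top}_\S\|\,\|\x^j_\S\|\leq e\,\|\x^{i\top}_\S-\d^{i\top}_\S\|$, and similarly for the teacher. The point is that this bound is independent of $j$, so summing over the $n$ choices of $j$ produces the factor $n$ and yields $ne\sum_i\|\x^{i\top}_\S-\d^{i\top}_\S\|$ and $ne\sum_i\|\x^{i\top}_\T-\d^{i\top}_\T\|$; adding these to the third sum gives the claimed inequality~\eqref{eq:k}.

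There is no real obstacle here, as the argument is elementary, but the one design choice that matters is the asymmetric decomposition: substituting the landmark only in the row slot (rather than in both slots) is what keeps the representation-error term linear in $n$ with the clean structure $\sum_i\|\x^i-\d^i\|$, and it is also what makes $B$ coincide with the quantity that is actually optimized during transfer. One should additionally keep in mind the mild abuse of notation whereby $\|\cdot\|$ denotes absolute value on the scalar inner products in $B$ and the Euclidean norm on the row vectors $\x^{i\top}-\d^{i\top}$ in $A$.
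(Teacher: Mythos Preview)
Your proposal is correct and follows essentially the same route as the paper: an entrywise bound on $\|K_\S-K_\T\|_F$, the asymmetric insertion of $\d^i$ in the row slot only, the triangle inequality, and Cauchy--Schwarz with the norm bound $e$. If anything, your write-up is slightly more explicit than the paper's, since you spell out the $\ell_2\le\ell_1$ step that the paper covers with the phrase ``accumulating over all pairs.''
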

\begin{proof}
For an arbitrary pair, we have
\begin{align*}
&\|K_\S(\x_i,\x_j)  - K_\T(\x_i,\x_j)\| = \|\x^{i\top}_\S\x^j_\S- \x^{i\top}_\T\x^j_\T\|\\
& =  \|(\x^{i\top}_\S-\d^{i\top}_\S+\d^{i\top}_\S)\x^j_\S- (\x^{i\top}_\T-\d^{i\top}_\T+\d^{i\top}_\T)\x^j_\T\|\\
& \leq \|(\x^{i\top}_\S-\d^{i\top}_\S)\x^j_\S\|+ \|(\x^{i\top}_\T-\d^{i\top}_\T)x^j_\T\|\\
&+\|\d^{i\top}_\S\x^j_\S - \d^{i\top}_\T\x^j_\T\|\\
&\leq e\|\x^{i\top}_\S-\d^{i\top}_\S\|+e\|\x^{i\top}_\T-\d^{i\top}_\T\|+\|\d^{i\top}_\S\x^j_\S - \d^{i\top}_\T\x^j_\T\|
\end{align*}
We obtain the result by accumulating over all pairs.
\end{proof}

According to Theorem~\ref{thm:5}, we can find that the transfer loss comes from two aspects. Term $A$ in Eqn.~\ref{eq:k} contains the distance from each example to its corresponding landmark point. Since the corresponding landmark point for $\x_\S^i$ can be obtained by $\arg\min_{1\le l\le L}\{\|\x^{i\top}_\S-\d^{l\top}_\S\|\}$, we can rewrite the problem of minimizing the original term as
\[\min\sum_i\min_l\{\|\x^{i\top}_\S-\d^{l\top}_\S\|\}\]
Apparently, this objective is a standard clustering problem. Unlike the conventional Nystr{\"{o}}m method, which is often in an unsupervised learning setting, we can access the label information in knowledge distillation. When we set the number of clusters to be the number of classes, the landmark point becomes the center in each class and can be computed by averaging examples within the class as
\begin{eqnarray}\label{eq:updatec}
\d_\S^l = \frac{1}{n_l}\sum_{i:y_i=l}\x_\S^i;\quad \d_\T^l = \frac{1}{n_l}\sum_{i:y_i=l}\x_\T^i
\end{eqnarray}
where $y_i$ is the class label of the $i$-th example and $n_l$ denotes the number of examples in the $l$-th class. In addition, class centers keep the diversity from original examples which can have the large eigenvalues as required by Theorem~\ref{thm:3}. We also verify the assumption in the experiments.

Term $B$ from Eqn.~\ref{eq:k}, in fact, indicates the difference between the student and teacher Gram matrices defined by the landmark points that is consistent as in Theorem~\ref{thm:3}. 

With landmark points $D_\S$ and $D_\T$ obtained from optimizing the term $A$, we can formulate the Knowledge Distillation problem by transferring Approximated Gram matrix (KDA) as
\begin{align}\label{eq:kloss}
\min_{X_\S} \ell_{\mathrm{KDA}}(X_\S^\top D_\S- X_\T^\top D_\T)
\end{align}
where $\ell_{\mathrm{KDA}}(X_\S^\top D_\S- X_\T^\top D_\T)= \sum_{i=1,l=1}^{i=n, l=L}\ell(\d^{l\top}_\S\x^i_\S - \d^{l\top}_\T\x^i_\T)$ and we adopt $\ell(\cdot)$ as the smoothed $L_1$ loss for the stable optimization as \[\ell(z) = \left\{\begin{array}{cc}|z|-0.5&z>1\\0.5z^2&o.w.\end{array}\right.\]

With the KDA loss, we propose a novel knowledge distillation algorithm that works in an alternative manner. In each iteration, we first compute the landmark points with features of examples accumulated from the last epoch by Eqn.~\ref{eq:updatec}. Then, the KDA loss defined by the fixed landmark points in Eqn.~\ref{eq:kloss} will be optimized along with a standard cross-entropy loss for the student. The proposed algorithm is summarized in Alg.~\ref{alg:kda}. Since at least one epoch will be spent on collecting features for computing landmark points, we will minimize the KDA loss after $H$ epochs of training, where $H\geq 1$.

\begin{algorithm}[t]
   \caption{\textbf{K}nowledge \textbf{D}istillation by \textbf{A}pproximated Kernel Matrix Transfer (KDA)}
   \label{alg:kda}
\begin{algorithmic}
   \STATE {\bfseries Input:} Data set $\{\x_i\}$, a student model $\S$, a teacher model $\T$, total epochs $T$, warm-up epochs $H$
   \STATE Initialize $\{\d_\S^0\}=\emptyset$, $\{\d_\T^0\} = \emptyset$
   \FOR{$t=1$ {\bfseries to} $H$}
   \STATE Optimize $\S$ without KDA loss
   \STATE Compute $\{\d_\S^t\}$, $\{\d_\T^t\}$ as in Eqn.~\ref{eq:updatec}
   \ENDFOR
   \FOR{$t=H+1$ {\bfseries to} $T$}
   \STATE Optimize $\S$ with KDA loss defined on $\{\d_\S^{t-1}\}$, $\{\d_\T^{t-1}\}$
   \STATE Compute $\{\d_\S^t\}$, $\{\d_\T^t\}$ as in Eqn.~\ref{eq:updatec}
   \ENDFOR
   \STATE {\bfseries Output:} $\S$
\end{algorithmic}
\end{algorithm}

\subsection{Connection to Conventional KD}\label{sec:connection}
In the conventional KD method~\cite{HintonVD15}, only the outputs from the last layer in the teacher model are adopted for the student. By setting an appropriate parameter, \cite{HintonVD15} illustrates that the loss function for KD can be written as
\[\ell(X_\S) = \sum_i\|\x_\S^i - \x_\T^i\|^2\]
where $\x_\S^i, \x_\T^i\in\R^{C}$ denote the logits before the SoftMax operator. With the identity matrix $I$, the equivalent formulation is
\[\ell(X_\S) = \|X_\S^\top I - X_\T^\top I\|_F^2\]
Compared to the KDA loss in Eqn.~\ref{eq:kloss}, the conventional KD can be considered as applying one-hot label vectors as landmark points to transfer the Gram matrix of the teacher network. However, it lacks the constraints on the similarity between each example and its corresponding landmark point as illustrated in Theorem~\ref{thm:5}, which may degenerate the performance of knowledge distillation.

\section{Experiments}
\label{sec:exp}
We conduct experiments on two benchmark data sets to illustrate the effectiveness of the proposed KDA algorithm. We employ ResNet-34 (denoted as R34)~\cite{HeZRS16}  as the teacher network. ResNet-18 (denoted as R18), ResNet-18-0.5 (denoted as R18-0.5) and ShuffleNetV2 (denoted as SN)~\cite{MaZZS18} are adopted as student networks, where ResNet-18-0.5 denotes ResNet-18 with a half number of channels. We apply the standard stochastic gradient descent (SGD) with momentum to train the networks. Specifically, we set the size of mini-batch to $256$, momentum to $0.9$ and weight decay to $5e$-$4$ in all experiments. The student models are trained with $120$ epochs. The initial learning rate is $0.1$ and cosine decay is adopted with $H=5$ epochs for warm-up. All experiments are implemented on two V100 GPUs.

Three baseline knowledge distillation methods are included in the comparison
\begin{itemize}
\item KD~\cite{HintonVD15}: a conventional knowledge distillation method that constrains the KL-divergence between the output label distributions of the student and teacher networks.
\item AT~\cite{ZagoruykoK17}: a method that transfers the information from intermediate layers to accelerate the training of the student network.
\item RKD~\cite{ParkKLC19}: a recent representative work that regularizes the similarity matrices within a mini-batch between the student and teacher networks. It adopts the same similarity function as in our method. Although different ways were used to calculate the similarity (e.g., L2 in \cite{LiuCLYHLD19,ParkKLC19}, normalized inner product in \cite{Tung_2019_ICCV} and RBF kernel in \cite{Peng_2019_ICCV}), they provide similar performance.
\end{itemize}
Every algorithm will minimize the combined loss from both the distillation and the standard cross entropy loss for classification. For RKD, we transfer the features before the last fully-connected (FC) layer for comparison. Note that AT transfers the attention map of the teacher, so we adopt the feature before the last pooling layer for distillation. Besides, we let ``Baseline'' denote the method that trains the student without information from the teacher. Our method is referred as ``KDA''. We search the best parameters for all methods in the comparison and keep the same parameters for different experiments.

\subsection{Ablation Study}

We perform the ablation study on CIFAR-100~\cite{Krizhevsky2009Learning}. This data set contains $100$ classes, where each class has $500$ images for training and $100$ for test. Each image is a color image with size of $32\times 32$. 

In this subsection, we set ResNet-34 as the teacher and ResNet-18 as the student. During the training, each image is first padded to be $40\times 40$, and then we randomly crop a $32\times 32$ image from it. Besides, random horizontal flipping is also adopted for data augmentation.

\subsubsection{Effect of Landmark Points}
First, we evaluate the strategy for generating landmark points. As illustrated in Corollary~\ref{cor:1}, the randomly selected landmark points can achieve a good performance. So we compare the KDA with class centers to that with random landmark points in Fig.~\ref{fig:lp}. In this experiment, we adopt the features before the last FC layer for transfer.

\begin{figure}[h]
\centering
\begin{minipage}{1.6in}
\centering
\includegraphics[width=1.6in]{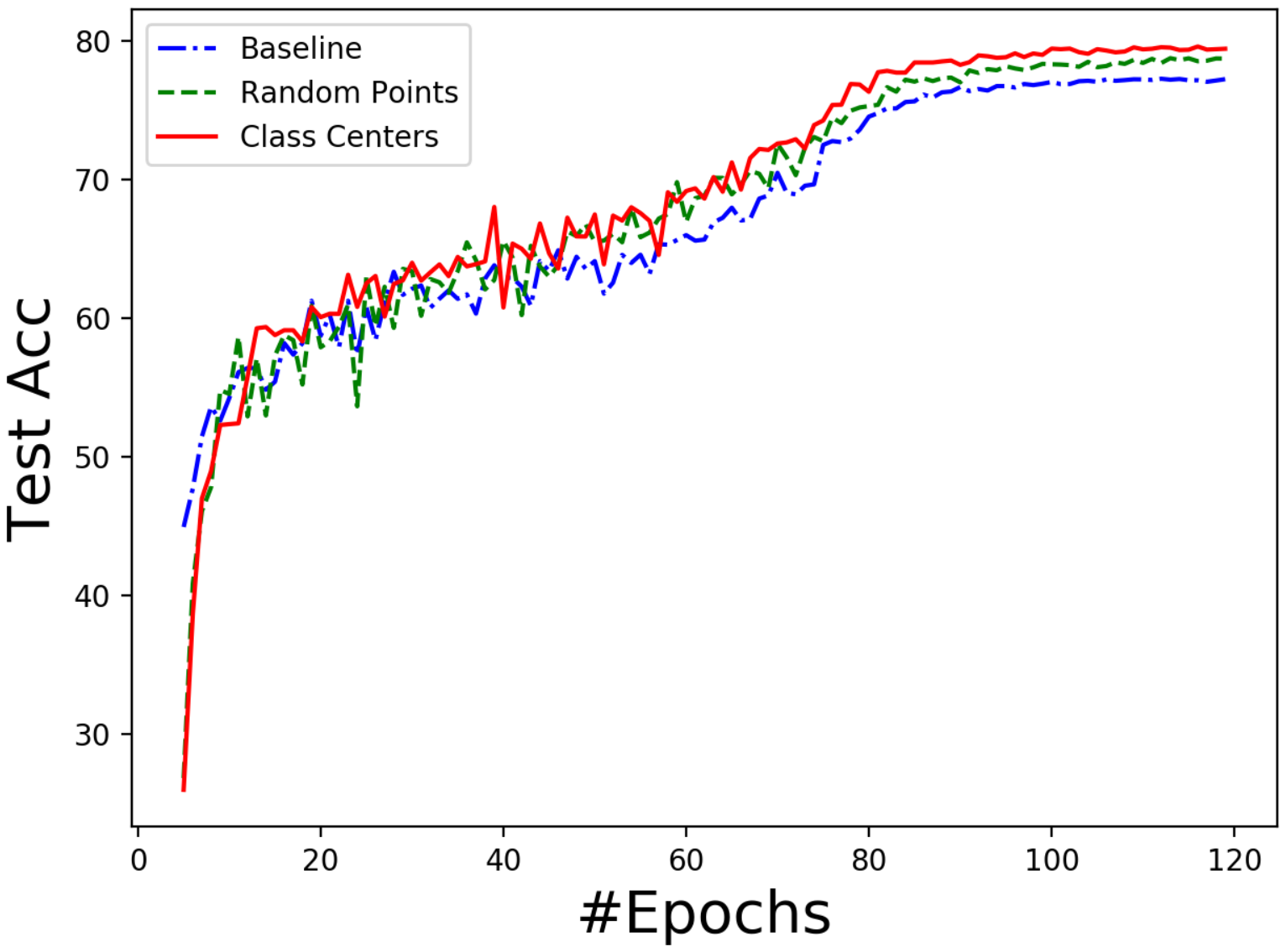}
\mbox{\footnotesize (a) Overall Comparison}
\end{minipage}
\begin{minipage}{1.6in}
\centering
\includegraphics[width=1.6in]{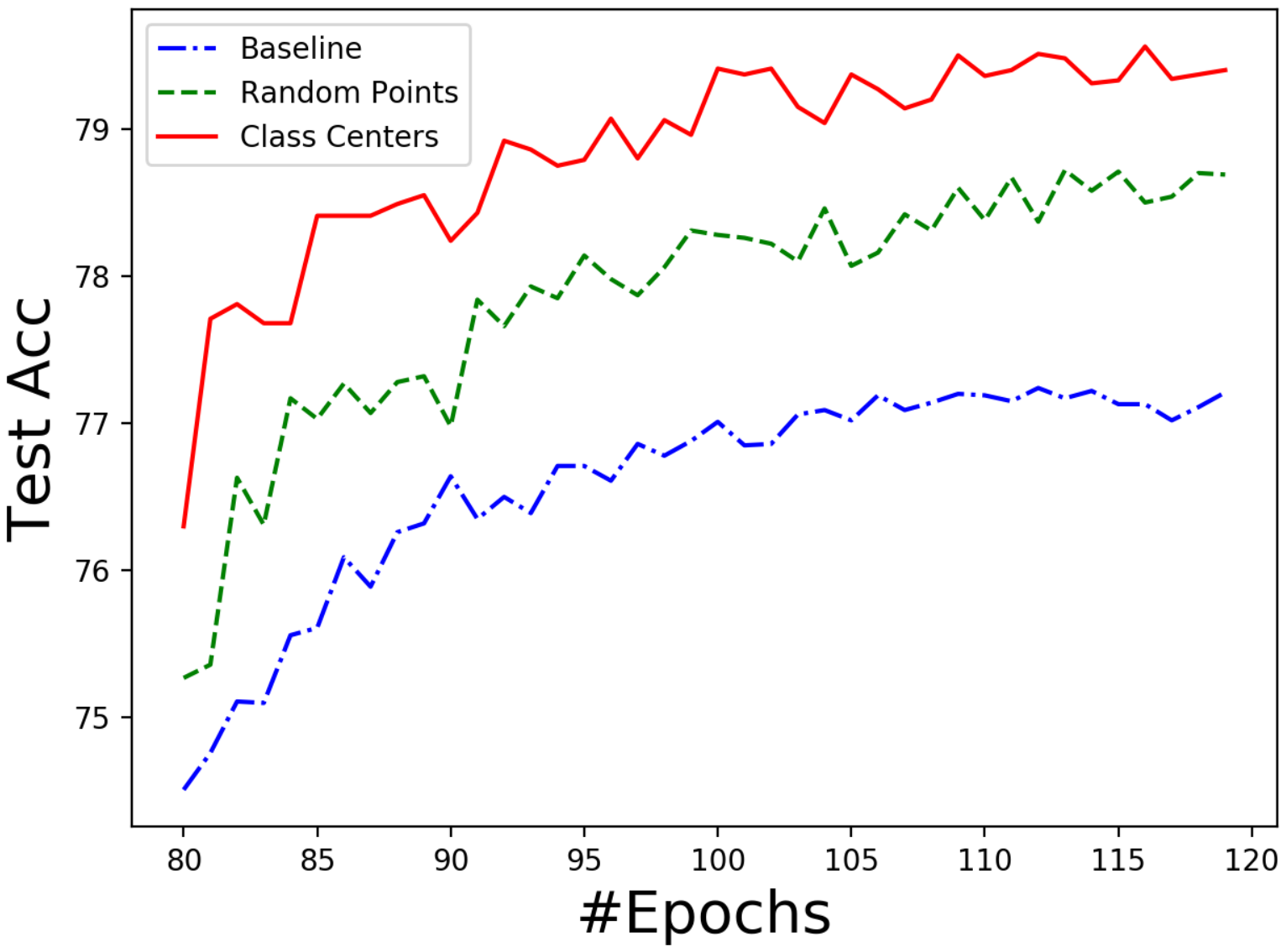}
\mbox{\footnotesize (b) Zoom In}
\end{minipage}
\caption{Comparison of landmark points selection.\label{fig:lp}}
\end{figure}

From Fig.~\ref{fig:lp}, we can observe that with landmark points, two variants of KDA perform significantly better than the baseline. It demonstrates that Gram matrix is informative for training student models and transferring full matrix from teacher can help improve the performance of student. In addition, KDA with class centers as the landmark points shows the best performance among different methods, which confirms the criterion suggested in Theorem~\ref{thm:5}. We will use class centers as landmark points in the remaining experiments. 

\subsubsection{Effect of Full Matrix Transfer}

Then, we compare the difference between Gram matrices from a teacher and its student models. The performance of transferring a Gram matrix is measured by $\|K_\S-K_\T\|_F/\|K_\T\|_F$, which calculates the faction of information that has not been transferred. We investigate features from two layers in the comparison: the one before the last FC layer and that after the FC layer. The transfer performance of different layers are illustrated in Fig.~\ref{fig:kerr} (a) and (b), respectively.

\begin{figure}[ht]
\centering
\begin{minipage}{1.6in}
\centering
\includegraphics[width=1.6in]{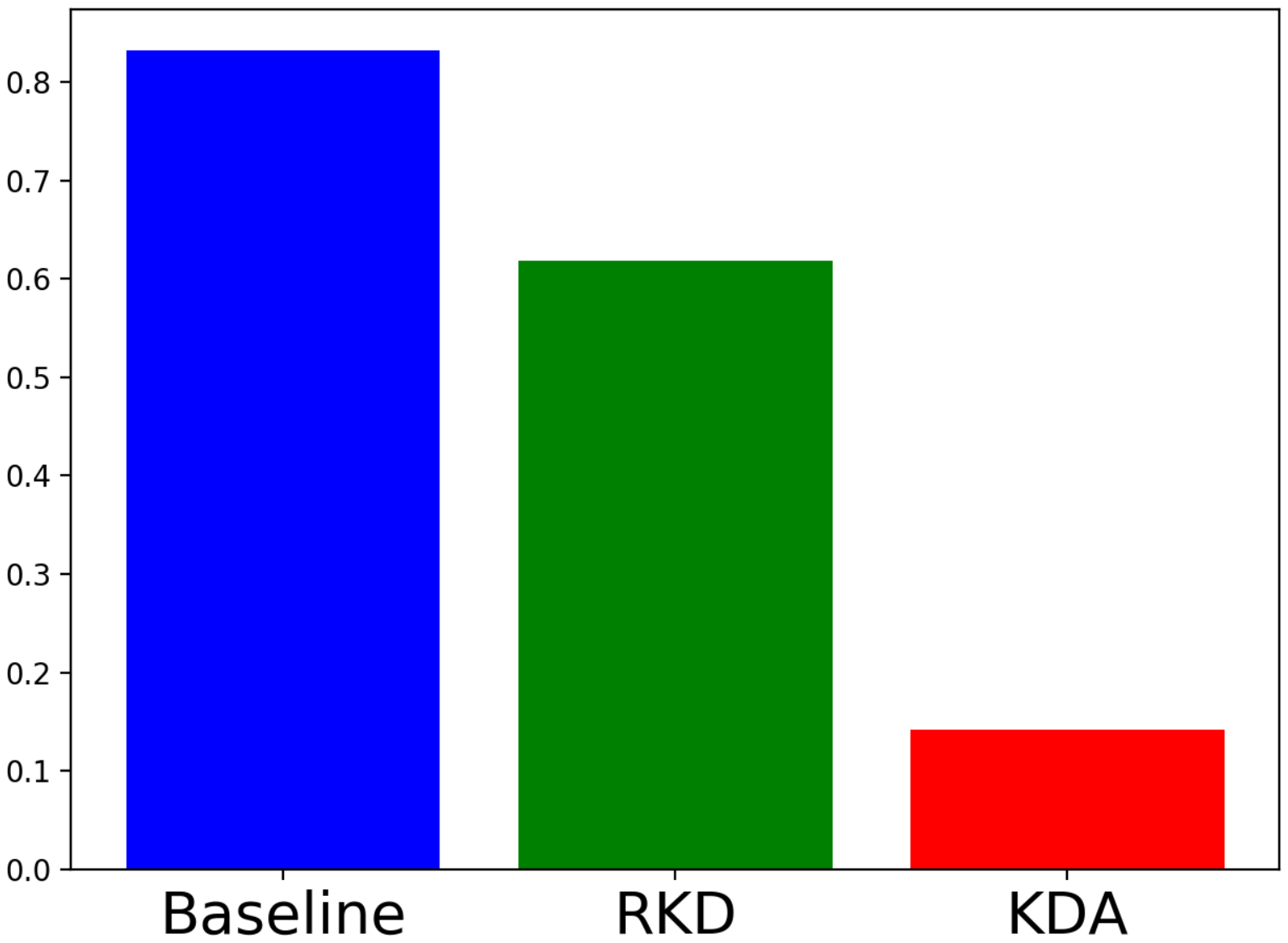}
\mbox{\footnotesize (a) Layer before FC}
\end{minipage}
\begin{minipage}{1.6in}
\centering
\includegraphics[width=1.6in]{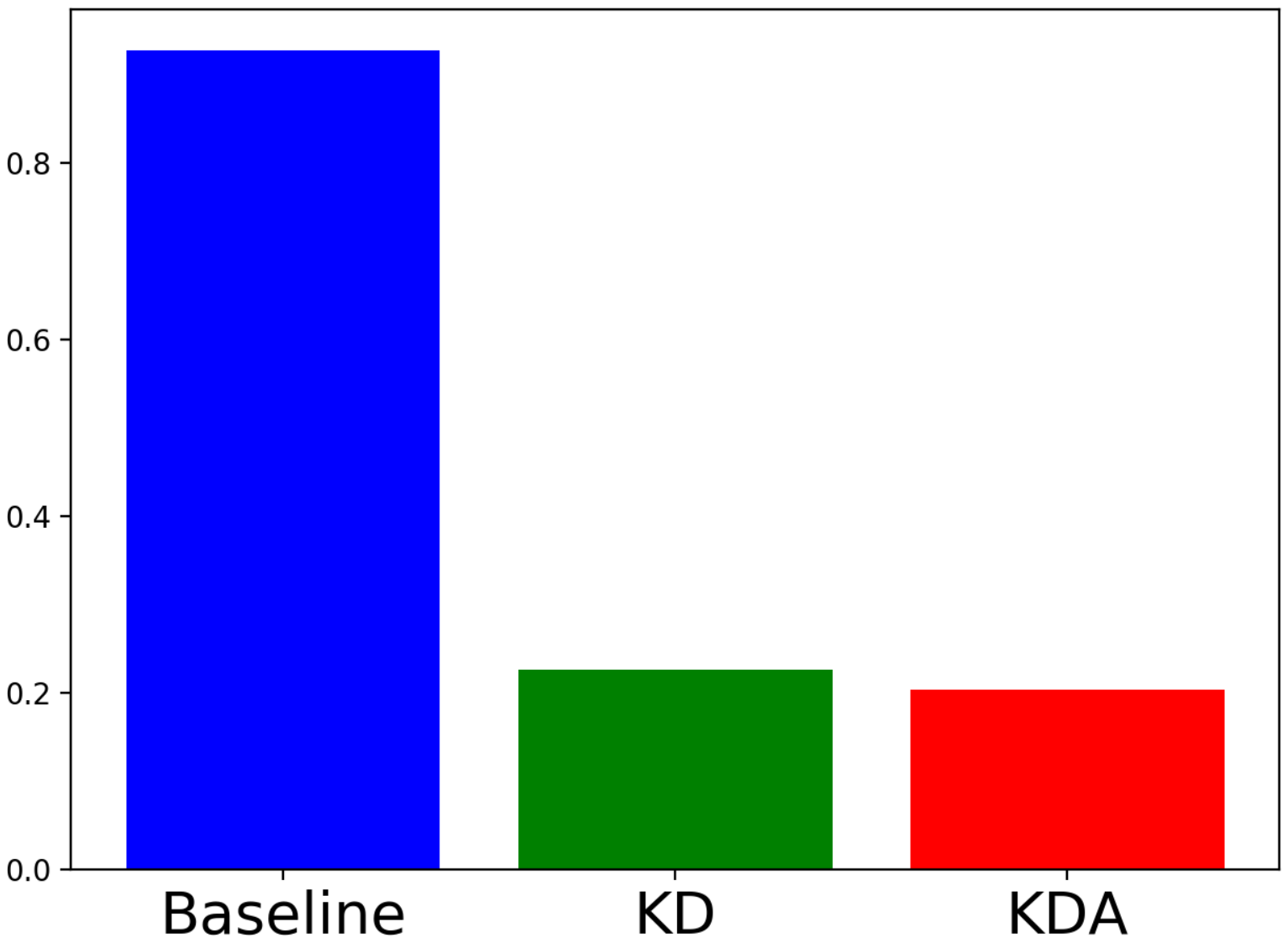}
\mbox{\footnotesize (b) Layer after FC}
\end{minipage}
\caption{Comparison of kernel transfer performance measured by $\|K_\S-K_\T\|_F/\|K_\T\|_F$ (lower is better).\label{fig:kerr}}
\end{figure}

Fig.~\ref{fig:kerr} (a) compares the performance of kernel transfer. First, it is obvious that both RKD and KDA are better than the baseline. It indicates that minimizing the difference between Gram matrices can effectively transfer appropriate information from the teacher. Second, RKD transfers the similarity matrix defined by examples in a mini-batch only and shows a larger transfer loss than KDA. Considering the massive number of pairs, optimizing with all of these pairs in RKD is intractable. Note that the number of pairs can be up to $\OO(n^2)$ while the number of pairs in a mini-batch is only $\OO(r^2)$, where $r$ is the size of a mini-batch. To visit all pairs only once, it requires at least $\OO(n/r)$ epochs.

On the contrary, the loss from KDA is only about $23\%$ of that from RKD. KDA optimizes the partial Gram matrix with landmark points and the total number of pairs is linear in that of original examples. Due to a small number of landmark points, the partial matrix $C$ is much more compact than the original one. For example, there are $50,000$ examples in CIFAR-100. When applying $100$ landmark points for distillation, the partial matrix contains $0.2\%$ terms of the original one. Besides, since we keep class centers in the memory as the parameters of the loss function, the full Gram matrix can be approximated in a single epoch. Therefore, SGD can optimize the KDA loss sufficiently.

Then, we compare the performance of transfer after the last FC layer as shown in Fig.~\ref{fig:kerr} (b). For KDA, we compute the Gram matrix with features before the SoftMax operator. From the comparison, we can observe that both of KD and KDA have much less transfer loss than the baseline. As illustrated in the discussion of ``Connection to Conventional KD'', conventional KD is equivalent to transferring the partial Gram matrix with one-hot landmark points. Therefore, it can reduce the difference between teacher and student effectively. However, the landmark points adopted by KD fail to satisfy the property illustrated in Theorem~\ref{thm:5}. By equipping class centers as landmark points, KDA can further reduce the transfer loss from $0.23$ in KD to $0.2$, which confirms the effectiveness of transferring full Gram matrix with appropriate landmark points.

\begin{figure}[!ht]
\centering
\includegraphics[width=2.4in]{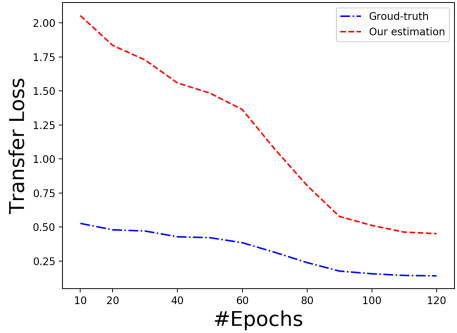}
\caption{$\|C_\S-C_\T\|_F/\|K_\T\|_F$ (i.e., our estimation) vs. $\|K_\S-K_\T\|_F/\|K_\T\|_F$ (i.e., ground-truth). \label{fig:corr}}
\end{figure}

Finally, we demonstrate that the difference between partial Gram matrices is closely correlated with that between full Gram matrices as suggested in Theorem~\ref{thm:3}. Features from the layer before the last FC layer are adopted for evaluation. Fig.~\ref{fig:corr} illustrates how the ground-truth transfer loss $\|K_\S-K_\T\|_F/\|K_\T\|_F$ and the estimated transfer loss $\|C_\S-C_\T\|_F/\|K_\T\|_F$ are changing during the training ($100\times$values used for better visualization). Evidently, minimizing $\|C_\S-C_\T\|_F$ can reduce the gap between the original Gram matrices effectively, which is consistent with our theoretical analysis. Note that we have an assumption in Theorem~\ref{thm:3} that the smallest eigenvalues of $W_{\S}$ and $W_{\T}$ are larger than $1$. Since we adopt class centers as landmark points, $W_{\S}$ and $W_{\T}$ can be full rank matrices. We show their smallest eigenvalues in Fig.~\ref{fig:eigen}. It is obvious that the smallest eigenvalue is larger than 10, consistent with our assumption.

\begin{figure}[!ht]
\centering
\includegraphics[width=2.4in]{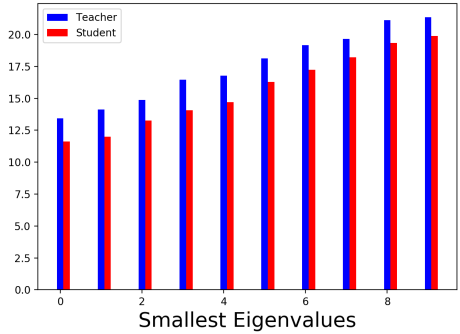}
\caption{Smallest eigenvalues of $W_{\S}$ (i.e., student) and $W_{\T}$ (i.e., teacher). \label{fig:eigen}}
\end{figure}

\subsubsection{Effect of Matrix $W$}

Corollary~\ref{cor:1} implies a variant that uses the standard Nystr{\"{o}}m method including matrix $W$ for transferring, while our proposal ignores $W$ and optimizes only the difference between $C_\S$ and $C_\T$ for efficiency. We compare our method to the one with $W$ in Fig.~\ref{fig:w}, where features before the last FC layer are adopted for transfer. During the experiment, we find that $W_\T^{+\frac{1}{2}}$ provides better performance. We can observe that our method without $W$ has a similar performance as the one with $W_\T^{+\frac{1}{2}}$. It further demonstrates our analysis in Theorem~\ref{thm:3}.

\begin{figure}[!ht]
\centering
\includegraphics[width=2.4in]{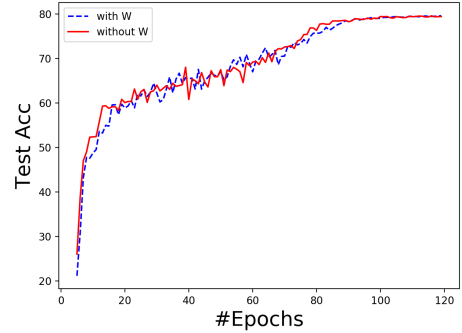}
\caption{Comparison of our method and the method with matrix $W$ as in Corollary~\ref{cor:1} \label{fig:w}}
\end{figure}

\subsubsection{Effect of $\#$Centers per Class}
When assigning landmark points, we set the number to be that of classes, which avoids clustering in the implementation. It also constrains that each class has a single landmark point. The number of landmark points for each class can be easily increased by clustering. We compare the variant with two centers per class in Fig.~\ref{fig:mc}, where features before the last FC layer are adopted for comparison.

\begin{figure}[!ht]
\centering
\includegraphics[width=2.4in]{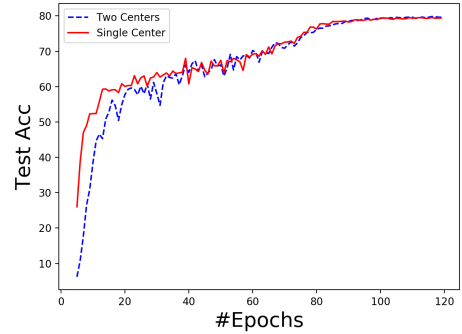}
\caption{Comparison of different number of landmark points (i.e., centers) for each class.\label{fig:mc}}
\end{figure}

We can observe that more centers for each class cannot improve the performance significantly. It may be because that the feature space is optimized with the cross entropy loss. As illustrated in \cite{QianSSHTLJ19}, cross entropy loss will push all examples from the same class to a single center. Therefore, assigning one landmark point for each class is an appropriate setting, which also simplifies the algorithm and improves the efficiency. We use one center per class in the following experiments.

\begin{table}[!ht]
\centering
\small
\caption{Comparison of accuracy ($\%$) on CIFAR-100 when transferring the full Gram matrix from different layers (S is the student without KD and T is the teacher).}\label{ta:mc}
\begin{tabular}{|l|l|l|l|l|l|}\hline
S&conv3\_x&conv4\_x&Before FC&After FC&T\\\hline
77.2&77.7&78.1&79.6&79.4&80.3\\\hline
\end{tabular}
\end{table}

\begin{table*}[!ht]
\centering
\footnotesize
\caption{Comparison of accuracy ($\%$) on CIFAR-100 (S is student without KD and T is teacher).}\label{ta:cifar100}
\resizebox{\textwidth}{10mm}{
\begin{tabular}{|l|l|l||l|l|l||l|l||l||l|}\hline
\multirow{2}{*}{T}&\multirow{2}{*}{S}&\multirow{2}{*}{S}&\multicolumn{3}{c||}{Before Last FC}&\multicolumn{2}{c||}{After FC}&Combo&\multirow{2}{*}{T}\\\cline{4-9}
&&&AT&RKD&KDA&KD&KDA&KDA&\\\hline
R34&R18&77.2&78.1$\pm$0.3&78.3$\pm$0.2&79.6$\pm$0.1&78.8$\pm$0.2&79.4$\pm$0.1&  \textbf{79.7$\pm$0.1} &80.3\\
R34&R18-0.5&73.5&75.0$\pm$0.1&74.3$\pm$0.3&75.6$\pm$0.3&74.8$\pm$0.2&75.3$\pm$0.2&\textbf{75.9$\pm$0.2}&80.3\\
R34&SN&71.7&73.0$\pm$0.1&72.5$\pm$0.1&74.0$\pm$0.3&72.9$\pm$0.1&73.6$\pm$0.1&\textbf{74.2$\pm$0.3}&80.3\\\hline
\end{tabular}}
\end{table*}

\begin{table*}[!ht]
\centering
\footnotesize
\caption{Comparison of accuracy ($\%$) on Tiny-ImageNet (S is student without KD and T is teacher).}\label{ta:tiny}
\resizebox{\textwidth}{10mm}{
\begin{tabular}{|l|l|l||l|l|l||l|l||l||l|}\hline
\multirow{2}{*}{T}&\multirow{2}{*}{S}&\multirow{2}{*}{S}&\multicolumn{3}{c||}{Before Last FC}&\multicolumn{2}{c||}{After FC}&Combo&\multirow{2}{*}{T}\\\cline{4-9}
&&&AT&RKD&KDA&KD&KDA&KDA&\\\hline
R34&R18&63.4&64.4$\pm$0.1&63.9$\pm$0.2&65.2$\pm$0.3&64.9$\pm$0.2&65.4$\pm$0.1&\textbf{65.5$\pm$0.1}&66.6\\
R34&R18-0.5&60.3&61.0$\pm$0.1&60.6$\pm$0.2&61.7$\pm$0.3&61.3$\pm$0.1&61.9$\pm$0.2&\textbf{62.2$\pm$0.2}&66.6\\
R34&SN&60.6&61.3$\pm$0.1&61.2$\pm$0.2&62.0$\pm$0.2&61.5$\pm$0.1&62.3$\pm$0.2&\textbf{62.4$\pm$0.1}&66.6\\\hline
\end{tabular}}
\end{table*}

\subsubsection{Effect of Different Layers}
Now, we illustrate the performance of transferring the Gram matrix from different layers. ResNet consists of $5$ convolutional layer groups and we compare the performance of the last $3$ groups (i.e., ``conv3\_x'', ``conv4\_x'' and ``conv5\_x'') and the one after the last FC layer. The definition of groups can be found in \cite{HeZRS16}. For each group, we adopt the last layer for transfer. Before transfer, we add a pooling layer to reduce the dimension of the feature map. Note that after pooling, the last layer of ``conv5\_x'' becomes the layer before the last FC layer.

Table~\ref{ta:mc} shows the performance of transferring information from different layers. First, transferring information from teacher always improves the performance of student, which demonstrates the effectiveness of knowledge distillation. Besides, the information from the later layers is more helpful for training student. It is because later layers contain more semantic information that is closely related to the target task. We will focus on the layers before and after the FC layer in the rest experiments.

\subsection{CIFAR-100}

In this subsection, we compare the proposed KDA method to baseline methods on CIFAR-100. The results of different methods can be found in Table~\ref{ta:cifar100}. All experiments are repeated $3$ times and the average results with standard deviation are reported.

First, knowledge distillation methods outperform training the student network without a teacher, which shows that knowledge distillation can improve the performance of student models significantly. By transferring the full Gram matrix before the last FC layer, KDA surpasses RKD by $1.3\%$ when ResNet-34 is the teacher and ResNet-18 is the student. The observation is consistent with the comparison in the ablation study, which confirms that $\|K_\S-K_\T\|_F/\|K_\T\|_F$ is an appropriate metric to evaluate the transfer loss of the Gram matrix. Moreover, KDA shows a significant improvement on different student networks, which further demonstrates the applicability of the proposed method.

Furthermore, when transferring the Gram matrix after the last FC layer, both of KD and KDA can demonstrate good performance compared with the student model. It is due to the fact that these methods transfer the Gram matrix with landmark points, which is efficient for optimization. Besides, KDA can further improve the performance compared to KD. The superior performance of KDA demonstrates the effectiveness of the proposed strategy for generating appropriate landmark points.

Finally, compared to benchmark methods, KDA can distill the information from different layers with the same formulation in Eqn.~\ref{eq:kloss}. The proposed method provides a systematic perspective to understand a family of knowledge distillation methods that aim to transfer Gram matrix. If transferring the Gram matrices before and after the FC layer simultaneously, the performance of KDA can be slightly improved as illustrated by ``Combo'' in Table~\ref{ta:cifar100}.

\subsection{Tiny-ImageNet}
Then, we compare different methods on Tiny-ImageNet data set\footnote{https://tiny-imagenet.herokuapp.com}. There are $200$ classes in this data set and each class provides $500$ images for training and $50$ for validation. We report the performance on the validation set. Since the size of images in Tiny-ImageNet is $64\times 64$ that is larger than CIFAR-100, we replace the random crop augmentation with a more aggressive version as in \cite{HeZRS16}, and keep other settings the same.

Table~\ref{ta:tiny} summarizes the comparison. We can observe the similar results as on CIFAR-100. First, all methods with the information from a teacher model can surpass the student model without a teacher by a large margin. Second, KDA outperforms other methods no matter in which layer the transfer happens. Note that CIFAR-100 and Tiny-ImageNet have very different images, which demonstrates the applicability of the proposed algorithm in different real-world applications.

\section{Conclusions}
\label{sec:conclude}
In this work, we investigate the knowledge distillation problem from the perspective of kernel matrix transfer. Considering the number of terms in the full kernel matrix is quadratic in the number of training examples, we extend the Nystr{\"{o}}m method and propose a strategy to obtain the landmark points for efficient transfer. The proposed method not only improves the efficiency of transferring the kernel matrix, but also has the theoretical guarantee for the efficacy. Experiments on the benchmark data sets verify the effectiveness of the proposed algorithm. 

Besides the similarity function applied in this work, there are many other complicated functions adopted by other methods. Combining the proposed algorithm with different similarity functions for distillation can be our future work.


\bibliographystyle{siam}
\bibliography{distill19}

\begin{thebibliography}{10}

\bibitem{Ahn_2019_CVPR}
{\sc S.~Ahn, S.~X. Hu, A.~Damianou, N.~D. Lawrence, and Z.~Dai}, {\em
  Variational information distillation for knowledge transfer}, in Proceedings
  of the IEEE/CVF Conference on Computer Vision and Pattern Recognition, 2019,
  pp.~9163--9171.

\bibitem{ChenCYHC17}
{\sc G.~Chen, W.~Choi, X.~Yu, T.~X. Han, and M.~Chandraker}, {\em Learning
  efficient object detection models with knowledge distillation}, in Advances
  in Neural Information Processing Systems 30, 2017, pp.~742--751.

\bibitem{ChenWZ18}
{\sc Y.~Chen, N.~Wang, and Z.~Zhang}, {\em Darkrank: Accelerating deep metric
  learning via cross sample similarities transfer}, in Proceedings of the AAAI
  Conference on Artificial Intelligence, vol.~32, 2018.

\bibitem{CourbariauxBD15}
{\sc M.~Courbariaux, Y.~Bengio, and J.~David}, {\em Binaryconnect: Training
  deep neural networks with binary weights during propagations}, in Advances in
  Neural Information Processing Systems 28, C.~Cortes, N.~D. Lawrence, D.~D.
  Lee, M.~Sugiyama, and R.~Garnett, eds., 2015, pp.~3123--3131.

\bibitem{DrineasM05}
{\sc P.~Drineas, M.~W. Mahoney, and N.~Cristianini}, {\em On the nystr{\"o}m
  method for approximating a gram matrix for improved kernel-based learning.},
  JMLR, 6 (2005).

\bibitem{HeZRS16}
{\sc K.~He, X.~Zhang, S.~Ren, and J.~Sun}, {\em Deep residual learning for
  image recognition}, in Proceedings of the IEEE conference on computer vision
  and pattern recognition, 2016, pp.~770--778.

\bibitem{HintonVD15}
{\sc G.~Hinton, O.~Vinyals, and J.~Dean}, {\em Distilling the knowledge in a
  neural network}, arXiv preprint arXiv:1503.02531,  (2015).

\bibitem{Krizhevsky2009Learning}
{\sc A.~Krizhevsky, G.~Hinton, et~al.}, {\em Learning multiple layers of
  features from tiny images},  (2009).

\bibitem{KumarMT12}
{\sc S.~Kumar, M.~Mohri, and A.~Talwalkar}, {\em Sampling methods for the
  nystr{\"o}m method}, JMLR, 13 (2012), pp.~981--1006.

\bibitem{LiuCLYHLD19}
{\sc Y.~Liu, J.~Cao, B.~Li, C.~Yuan, W.~Hu, Y.~Li, and Y.~Duan}, {\em Knowledge
  distillation via instance relationship graph}, in Proceedings of the IEEE/CVF
  Conference on Computer Vision and Pattern Recognition, 2019, pp.~7096--7104.

\bibitem{MaZZS18}
{\sc N.~Ma, X.~Zhang, H.-T. Zheng, and J.~Sun}, {\em Shufflenet v2: Practical
  guidelines for efficient cnn architecture design}, in Proceedings of the
  European conference on computer vision (ECCV), 2018, pp.~116--131.

\bibitem{mirzadeh2020improved}
{\sc S.~I. Mirzadeh, M.~Farajtabar, A.~Li, N.~Levine, A.~Matsukawa, and
  H.~Ghasemzadeh}, {\em Improved knowledge distillation via teacher assistant},
  in Proceedings of the AAAI Conference on Artificial Intelligence, vol.~34,
  2020, pp.~5191--5198.

\bibitem{ParkKLC19}
{\sc W.~Park, D.~Kim, Y.~Lu, and M.~Cho}, {\em Relational knowledge
  distillation}, in Proceedings of the IEEE/CVF Conference on Computer Vision
  and Pattern Recognition, 2019, pp.~3967--3976.

\bibitem{Peng_2019_ICCV}
{\sc B.~Peng, X.~Jin, J.~Liu, D.~Li, Y.~Wu, Y.~Liu, S.~Zhou, and Z.~Zhang},
  {\em Correlation congruence for knowledge distillation}, in Proceedings of
  the IEEE/CVF International Conference on Computer Vision, 2019,
  pp.~5007--5016.

\bibitem{QianSSHTLJ19}
{\sc Q.~Qian, L.~Shang, B.~Sun, J.~Hu, H.~Li, and R.~Jin}, {\em Softtriple
  loss: Deep metric learning without triplet sampling}, in Proceedings of the
  IEEE/CVF International Conference on Computer Vision, 2019, pp.~6450--6458.

\bibitem{RomeroBKCGB14}
{\sc A.~Romero, N.~Ballas, S.~E. Kahou, A.~Chassang, C.~Gatta, and Y.~Bengio},
  {\em Fitnets: Hints for thin deep nets}, in Proceedings of the 3rd
  International Conference on Learning Representations, 2015.

\bibitem{SandlerHZZC18}
{\sc M.~Sandler, A.~Howard, M.~Zhu, A.~Zhmoginov, and L.-C. Chen}, {\em
  Mobilenetv2: Inverted residuals and linear bottlenecks}, in Proceedings of
  the IEEE conference on computer vision and pattern recognition, 2018,
  pp.~4510--4520.

\bibitem{srinivas2018knowledge}
{\sc S.~Srinivas and F.~Fleuret}, {\em Knowledge transfer with jacobian
  matching}, in International Conference on Machine Learning, PMLR, 2018,
  pp.~4723--4731.

\bibitem{TianKI20}
{\sc Y.~Tian, D.~Krishnan, and P.~Isola}, {\em Contrastive representation
  distillation}, in Proceedings of the 8th International Conference on Learning
  Representations, 2020.

\bibitem{Tung_2019_ICCV}
{\sc F.~Tung and G.~Mori}, {\em Similarity-preserving knowledge distillation},
  in Proceedings of the IEEE/CVF International Conference on Computer Vision,
  2019, pp.~1365--1374.

\bibitem{WilliamsS00}
{\sc C.~K.~I. Williams and M.~W. Seeger}, {\em Using the nystr{\"{o}}m method
  to speed up kernel machines}, in Advances in Neural Information Processing
  Systems 13, 2000, pp.~682--688.

\bibitem{YimJBK17}
{\sc J.~Yim, D.~Joo, J.~Bae, and J.~Kim}, {\em A gift from knowledge
  distillation: Fast optimization, network minimization and transfer learning},
  in Proceedings of the IEEE Conference on Computer Vision and Pattern
  Recognition, 2017, pp.~4133--4141.

\bibitem{ZagoruykoK17}
{\sc S.~Zagoruyko and N.~Komodakis}, {\em Paying more attention to attention:
  Improving the performance of convolutional neural networks via attention
  transfer}, in Proceedings of the 5th International Conference on Learning
  Representations, 2017.

\bibitem{ZhangTK08}
{\sc K.~Zhang, I.~W. Tsang, and J.~T. Kwok}, {\em Improved nystr{\"o}m low-rank
  approximation and error analysis}, in Proceedings of the 25th international
  conference on Machine learning, 2008, pp.~1232--1239.

\end{thebibliography}

\end{document}